\newcolumntype{L}[1]{>{\raggedright\let\newline\\\arraybackslash\hspace{0pt}}m{#1}}
\newcolumntype{C}[1]{>{\centering\let\newline\\\arraybackslash\hspace{0pt}}m{#1}}
\newcolumntype{R}[1]{>{\raggedleft\let\newline\\\arraybackslash\hspace{0pt}}m{#1}}
\def\header{\vspace{1.5mm} \noindent}
\pgfplotsset{soldot/.style={color=black,only marks,mark=*}} \pgfplotsset{holdot/.style={color=black,fill=white,only marks,mark=*}}
\colorlet{shadecolor}{blue!20}
\newcommand*\colvec[1]{
        \global\colveccount#1
        \begin{pmatrix}
        \colvecnext
}
\def\colvecnext#1{
        #1
        \global\advance\colveccount-1
        \ifnum\colveccount>0
                \\
                \expandafter\colvecnext
        \else
                \end{pmatrix}
        \fi
}
\newtheorem{lem}{Lemma}[section]
\newtheorem{thm}[lem]{Theorem}
\newcommand{\SB}{\textsf{SB}}
\newcommand{\TB}{\textsf{TB}}
\newcommand{\WT}{\textsf{WT}}
\newcommand{\FL}{\textsf{FL}}
\renewcommand{\paragraph}[1]{\vspace{ authors would like to3pt}\noindent\textbf{#1}}
\newcommand{\A}{\mathcal{A}}
\newcommand{\D}{\mathcal{D}}
\newcommand{\UU}{\mathcal{U}}
\newcommand{\Lf}{\mathcal{L}}
\newcommand{\p}{^\prime}
\newcommand{\mynote}[3]{}
\newcommand{\ignore}[1]{}
\newcommand{\pdf}{\operatorname{pdf}}
\begin{document}

	\title{Differentially Private Regression for Discrete-Time Survival Analysis}

\author{Th\^{o}ng T. Nguy\^{e}n}
\affiliation{
  \institution{Nanyang Technological University}
}
\email{s140046@ntu.edu.sg}

\author{Siu Cheung Hui}
\affiliation{
  \institution{Nanyang Technological University}
}
\email{asschui@ntu.edu.sg}

\renewcommand{\shortauthors}{T.T. Nguy\^{e}n et al.}

\begin{abstract}
In survival analysis, regression models are used to understand the effects of explanatory variables (e.g., age, sex, weight, etc.) to the survival probability. However, for sensitive survival data such as medical data, there are serious concerns about the privacy of individuals in the data set when medical data is used to fit the regression models. The closest work addressing such privacy concerns is the work on Cox regression which linearly projects the original data to a lower dimensional space. However, the weakness of this approach is that there is no formal privacy guarantee for such projection. In this work, we aim to propose solutions for the regression problem in survival analysis with the protection of differential privacy which is a golden standard of privacy protection in data privacy research. To this end, we extend the \emph{Output Perturbation} and \emph{Objective Perturbation} approaches which are originally proposed to protect differential privacy for the Empirical Risk Minimization (ERM) problems. In addition, we also propose a novel sampling approach based on the Markov Chain Monte Carlo (MCMC) method to practically guarantee differential privacy with better accuracy. We show that our proposed approaches achieve good accuracy as compared to the non-private results while guaranteeing differential privacy for individuals in the private data set.
\end{abstract}

\begin{CCSXML}
<ccs2012>
<concept>
<concept_id>10002950.10003648.10003688.10003694</concept_id>
<concept_desc>Mathematics of computing~Survival analysis</concept_desc>
<concept_significance>500</concept_significance>
</concept>
<concept>
<concept_id>10002978.10003029.10011150</concept_id>
<concept_desc>Security and privacy~Privacy protections</concept_desc>
<concept_significance>500</concept_significance>
</concept>
</ccs2012>
\end{CCSXML}

\ccsdesc[500]{Mathematics of computing~Survival analysis}
\ccsdesc[500]{Security and privacy~Privacy protections}

\keywords{survival analysis; discrete-time models; differential privacy; regression models}

\maketitle

\section{Introduction}

Survival analysis studies and models probability of failure of time-related processes (e.g., time to death of HIV patients, time to divorce of married couples, time to graduation of Ph.D. students, etc.). Two important concepts in survival analysis are (1) the hazard rate function $h(t)$ which is the probability of failure (death) at time $t$, and (2) the survival function $S(t)$ which is the probability of survival to time $t$. An example of survival data set is the electronic health records (EHRs) which have been widely used and collected at large scale in modern hospitals \cite{desroches2008electronic,jha2009use,blumenthal2010meaningful}. These health records are very useful for fitting the regression models to assist doctors in the medical decision processes for treatment, diagnosis, etc.
In general, regression models are used to analyze the effects of explanatory variables (e.g., age, sex, weight, etc.) to the survival probability of patients.
However, these models may also have serious problems of breaching patient's privacy as there is no guarantee that these models do not leak any personal information of individual patients in the data set.

In this work, we focus on the privacy problems of regression models used in survival analysis.
We consider the setting in which privacy-preserving algorithms use data in the private data set to fit a survival regression model. The model is then published and available to the public for the benefits of society. Therefore, in this setting, the adversaries are assumed to know the output model, i.e., the parameters of the regression model. The goal is to design algorithms that can fit the survival regression model to the data set with high accuracy while guaranteeing that the adversaries cannot learn much information about the individuals in the data set when knowing the output model.

There are two different kinds of regression models in survival analysis, namely continuous-time models and discrete-time models. For continuous-time models, time is a continuous variable and failure events can happen at any moment. Cox regression is a well-known continuous-time model \cite{cox1992regression, andersen1982cox} which allows estimation without any assumption on the baseline hazard effects. However, we have to assume the proportional hazard property (i.e., a unit increase in an explanatory variable
will cause a multiplicative effect on the hazard rate). For discrete-time models \cite{allison1982discrete, cox1984analysis, muthen2005discrete}, time is discrete and failure events only happen at discrete values of time. Discrete-time regression models are better than Cox regression when dealing with tied events (i.e., events which have the same value of survival time) and unobserved population heterogeneity (i.e., unobserved explanatory variables may cause bias to the estimation). Moreover, it does not need the proportional hazard property assumption as Cox regression does \cite{hess2012duration}.

In this paper, we propose solutions for the problem of guaranteeing discrete-time models not to leak personal information of the patients.
Our proposed approaches guarantee differential privacy protection, which is the state-of-the-art privacy-preserving technique in data privacy research. Informally, a differentially private algorithm guarantees that two neighboring data sets which are different at only one patient's record are guaranteed to produce two outputs whose probability densities are very similar. This prevents an adversary from recognizing a data set from the collection of its neighbors. Therefore, an adversary cannot infer the personal information of a particular patient in the data set even in the case when the adversary knew all the information of all other patients in the data set (if otherwise, then the adversary can easily distinct two neighboring data sets).

In our solutions, we use the maximum likelihood estimation to transform the estimation problem to the optimization problem of choosing parameters to maximize the log-likelihood of the observed data set with respect to the discrete-time model.
Coincidentally, our problem has a similar likelihood form as a logistic regression problem. This allows us to use the Output Perturbation (\textsf{Out-Pert}) and Objective Perturbation (\textsf{Obj-Pert}) proposed by Chaudhuri et al. \cite{chaudhuri2011differentially} for our problem. These methods were originally proposed to protect differential privacy for the Empirical Risk Minimization (ERM) problems which include the logistic regression problem. The \textsf{Out-Pert} approach adds noise to the optimization solution to protect differential privacy. The \textsf{Obj-Pert} approach randomly perturbs the objective function, thereby ensuring the randomness of its optimization solution which can guarantee differential privacy for the solution. However, these approaches cannot be applied directly to our problem due to the difference in the loss function. Especially, this is due to the fact that our loss function is not a logistic loss function but a sum of logistic loss functions as the result of the discrete-time models. Therefore, we propose generalized extensions of the \textsf{Out-Pert} and \textsf{Obj-Pert} approaches to cater for our loss function.

A disadvantage of the above perturbation approaches is that for them to work properly they require a non-negligible regularization term in the objective function which incurs bias to the output model. To tackle this, we propose a sampling approach which protects differential privacy by directly sampling parameters from the objective function without the need of a regularization term to guarantee differential privacy. Similar ideas on sampling the objective functions to provide differential privacy are also proposed in \cite{wang2015privacy, kifer2012private, bassily2014private} for the ERM problems. However, it is required that the loss function has to have a finite maximum value. The previous works guarantee this property by boxing the output parameters in a finite-volume space (e.g., a sphere). This approach does not work well when the optimal parameter has a large magnitude. In this work, to guarantee the finite constraint, we wrap the loss function inside a sanitizer function (i.e., a scaled $tanh$ function) to create a new finite loss function.  We intentionally pick the sanitizer function that can keep the loss function in its original form when the value of the loss function is small. Meanwhile, the sanitizer function deforms the loss function at large values to make the function finite. The advantage of this approach is that the sampled parameter can arbitrary large while the objective function is kept almost the same around the optimal parameter which minimizes the objective function.

In order to sample an output parameter from the posterior distribution,  Bassily et al. \cite{bassily2014private} proposed a polynomial run-time algorithm to sampling the log-concave objective function but their algorithm is still impractical due to the high degree of its polynomial run-time complexity. On the other hand, Wang et al. \cite{wang2015privacy} proposed to use a stochastic gradient Nos{\'e}-Hoover thermostat algorithm \cite{ding2014bayesian} to sample the posterior distribution. In this work, we propose to use Preconditioned Stochastic Gradient Langevin Dynamics (pSGLD) sampling algorithm \cite{li2015preconditioned} to sample the objective function due to its advantages in sampling multi-dimensional parameters with different scales. It is worth to note that even though the sampling approach gives better accuracy (as we will see in Section 6), due to the property of its Markov chain, it cannot sample the objective function exactly. Therefore, the sampling approach does not mathematically guarantee differential privacy but only guarantees it approximately in practice.

In summary, the main contributions of this paper are as follows:
\begin{itemize}
    \item We propose two privacy-preserving approaches, namely the Extended Output Perturbation and Extended Objective Perturbation, for the discrete-time survival regression problem. The proposed approaches guarantee differential privacy for the survival regression models. We formally prove these guarantees based on the definition of differential privacy.
     \item We propose a sampling approach to output a random model from its posterior distribution. The proposed sampling approach is based on pSGLD, which is a particular kind of the Markov Chain Monte Carlo (MCMC) method, to efficiently sample the random output which  guarantees differential privacy approximately in practice.
     \item We show the effectiveness of our proposed approaches on four real survival data sets. In addition, we show that the results obtained from the discrete-time models are very close to the results obtained from Cox regression. We also show experimentally the convergence of our proposed sampling approach.
 \end{itemize}

The rest of the paper is organized as follows: In Section~\ref{sec2}, we review the related work on differential privacy and discrete-time survival analysis. Section~\ref{sec3} presents the regression models used in this work. Sections~\ref{sec4} discusses the proposed approaches of the Extended Output Perturbation and Extended Objective Perturbation along with their privacy guarantees. Section~\ref{sec5} discusses the proposed sampling approach. Section~\ref{sec6} presents the experimental results from real data sets. Finally, we conclude the paper in Section~\ref{sec7}.

\section{Related Work}
\label{sec2}

Even though it is important to protect privacy in medical data, as far as we know the work of Yu et al. \cite{yu2008privacy} is the only work on privacy protection for Cox regression. Their work considers the setting in which Cox regression is executed on a distributed data set over many institutions. They proposed to project patient's data to a lower dimensional space by a linear projection. The projection is satisfied by an optimization constraint to preserve good properties of the original data. However, their work is not based on a formal privacy definition such as differential privacy.
Our work on discrete-time models for survival analysis is the first to propose a solution for the privacy problem of discrete-time survival models and also the first to apply differential privacy to survival analysis.

\subsection{Differential Privacy}

The state-of-the-art technique for the data privacy problem is \emph{differential privacy} \cite{dwork2011firm, dwork2014algorithmic,dwork2009differential}.
Basically, differential privacy is a promise to individuals in the data set that their information will not  influence much on the final published results from the analysis.
Differential privacy is used in many applications such as histogram publication \cite{zhang2014towards, li2010optimizing},
graph analysis \cite{kasiviswanathan2013analyzing,lu2014exponential, borgs2015private}, regression  and classification \cite{chaudhuri2009privacy, kifer2012private,bassily2014private, wang2015privacy}, recommender systems \cite{machanavajjhala2011personalized,mcsherry2009differentially}, etc.
Here, we give a brief overview of differential privacy, interested readers can refer to \cite{dwork2014algorithmic} for a detailed discussion on this subject.

To formalize the definition of differential privacy, we first need to introduce the definition of \emph{two neighboring data sets.}

\begin{definition}[Neighboring data sets] Two data sets $\D$ and $\D\p$ are neighbors (denoted as $d(\D, \D\p) = 1$) if they agree in all except one record.
\end{definition}
 From that, we have a formal definition of differential privacy.
\begin{definition}[Differential privacy]
An algorithm $\A$ is $\epsilon-$differentially private if for any output value $x$ of $\A$ and for any pair of neighboring data sets $\D$ and $\D\p$:
$$
\pdf( \A(\D) = x) \leq \exp(\epsilon) \cdot \pdf(\A(\D\p) = x)
$$ where $\epsilon$ is the privacy budget of the algorithm $\A$.
\end{definition}

\subsection{Discrete-time Survival Analysis}

For discrete-time models, let time be divided into intervals $[a_0, a_1)$, $[a_1, a_2)$, $\dots$, $[a_{q-1}, a_{q}], a_0 =0, a_q = 1, $ where $q$ is the number of discrete times. The discrete time $t$ refers to the interval $[a_{t-1}, a_t)$. A discrete random variable $T$ represents the discrete failure time. $T = t$ denotes the failure within the time interval $t = [a_{t-1}, a_t)$. The characteristic function of $T$ is the discrete hazard function:
$$
h(t) = \Pr(T=t \mid T \geq t),~~~ t = 1, \dots, q
$$ which is the conditional probability for the risk of failure in interval $t$ given the survival in all previous intervals. The discrete survival function for reaching interval $t$ is:
\begin{align}
S(t) = \Pr(T \geq t) = \prod_{s=1}^{t-1} \left( 1 - h(s) \right) \label{ee2}
\end{align}

Discrete-time data sets are given by $(x_i, \delta_i, t_i), i = 1, \dots, n$, where $t_i = \min(T_i, c_i)$ is the minimum of the survival time $T_i$ and censoring time $c_i$, and $\delta_i$ is  the indicator variable for failure ($\delta_i = 1$) or censoring ($\delta_i = 0$). When $\delta_i = 0$, the $i^{th}$ patient is known to survive until time $c_i$ but the survival time $T_i$ is not observed ($T_i > c_i$).
$x_i$ is a real vector of explanatory variables (e.g., sex, age, weight, etc.) which affect the survival probability. We assume that $x_i$ is inside the unit-sphere,  $\|x_i\| \leq 1$. This is actually a common practice in machine learning. Without loss of generality, we assume that $0 \leq t_i \leq 1$.
For convenience, we use $y_i$ to refer to the term $(2\delta_i - 1)$ and $d_i$ to refer to the tuple $(x_i, y_i, t_i)$.

\section{Discrete-Time Regression Model}

\label{sec3}

In this section, we introduce the discrete-time regression models which are used to model the relationship between explanatory variables and the hazard rate, i.e., the predictive variable. From that, the subsequent sections will discuss the proposed differentially private approaches to guarantee that the estimated parameters from the regression model satisfy the definition of differential privacy.

\subsection{Generalized Linear Models}

We model the effects of explanatory variables $x_i$ to the survival probability by using a generalized linear model:
\begin{align}
g(h(t_i \mid x_i )) = \gamma(t_i) + x_i\p \beta \label{ee1}
\end{align} where $g(\cdot)$ is the link function, $\beta$ is the parameter vector representing the effects of explanatory variables and $\gamma(t_i)$ is a time-varying baseline hazard effect.

A commonly used link function in survival probability is the logit link function
$
g(x) = logit(x) = \log\left(\frac{x}{1-x}\right)
$. The logit link function allows the model to have a nice interpretation of the proportional odds ratio. The other two link functions, which are also used in survival analysis, are the complementary log-log link function
$
g(x) = cloglog(x) = \log(-\log(1 - x)),
$ and the probit link function $g(x) = probit(x) =  \Phi^{-1}(x),$ where $\Phi(\cdot)$ is the cumulative distribution function of the standard normal distribution. Interestingly, the complementary log-log link function has the same interpretation of proportional hazard ratio as the Cox regression. We refer interested readers to \cite{allison1982discrete} for more details.

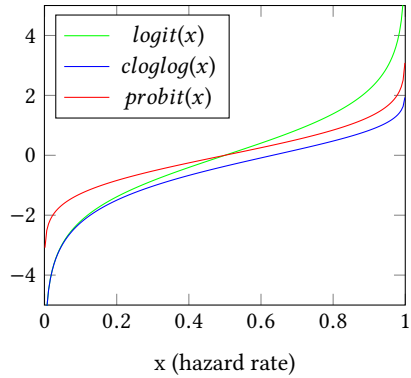
\begin{figure}[t]
  \centering
\begin{tikzpicture}
\begin{axis}[scale=0.7, xmin=-0.0005,xmax=1.00, ymin=-5, ymax=5,legend style={legend pos=north west},
legend entries={ {} {$logit(x)$} , {} {$cloglog(x)$}  , {} {$probit(x)$} }, xlabel={x (hazard rate)},
    declare function={inverf(\x)=\x/abs(\x) * sqrt( sqrt( (4.3307 + ln(1-\x^2)/2 )^2 - ln(1-\x^2)/0.147 ) - (4.3307 + ln(1-\x^2)/2);},
    declare function={normq(\p)=1.4142 * inverf(2*\p-1);}
]
\addplot [domain=0.001:0.999,samples=200,color=green] {ln(x/(1-x))} ;
\addplot [domain=0.001:0.999,samples=200,color=blue] {ln(-ln(1-x)) };
\addplot [domain=0.001:0.999,samples=200,color=red] {normq(x)};

\end{axis}
\end{tikzpicture}

  \caption{The illustration of link functions: logit (green), cloglog (blue) and probit (red). We observe that these link functions are very similar in shape. Notably, the logit link function and cloglog link function are almost identical at $x$ near $0$. This explains why the output models from the logit link function and cloglog link function are very similar in practice when the number of discrete time $q$ is large, or equivalently, the hazard rate $x$ is small. }
  \label {fig:linkfn}
\end{figure}

As illustrated in Figure 1, the three link functions have similar shapes which lead to similar estimation results. In this work, we have selected the logit link function because it has a bounded derivative for the loss function which is required by our proposed Extended Output Perturbation and Extended Objective Perturbation approaches. However, our proposed sampling approach can work with all three link functions.

\subsection{Baseline Hazard Effect}

We model the baseline hazard effect $\gamma(t)$ using natural cubic spline \cite{friedman2001elements} with $e$ knots equally distributed over the interval $[0,1]$, $0 = k_1  < k_2 < \dots <  k_e = 1$.

Let $$d_{j} (t) = \frac{ \max(t-k_j, 0)^3 - \max(t- k_e, 0)^3}{k_e - k_j}$$ and $$
b_1(t) = 1, b_2(t) = t, b_{i+2} = d_i(t) - d_{e-1}(t)
$$ The baseline hazard effect $\gamma(t)$ is approximated as a linear combination of $e$ basis functions:
$$
\gamma(t) = \alpha_1 b_1(t) + \dots + \alpha_e b_e(t)$$     In particular, let $A_i = [b_1(t_i), \dots, b_e(t_i)]\p$ and $\alpha = [\alpha_1, \dots, \alpha_e]\p$, then we can write
$  \gamma(t_i) = \alpha\p A_i.$

\subsection{Maximum Likelihood Estimation (MLE)}

Traditionally, we use MLE to estimate parameters $\alpha$ and $\beta$ in our models.  The aim is to maximize the log-likelihood of the observed data. For simplicity, let
$f = \colvec{2}{\alpha}{\beta}$ and $x_i^t  = \colvec{2}{A_t}{x_i} $.
The log-likelihood function is:
\begin{align*}
\log \Lf (f) &= \sum_{i=1}^n \log\left[ h(t_i \mid f, x_i)^{\delta_i}  (1-h(t_i \mid f, x_i))^{1-\delta_i}  S(t_i \mid f, x_i)\right]
\end{align*}
Let $y_i = 2\delta_i - 1$, from~\eqref{ee2}, \eqref{ee1} and substituting $g(x) = logit(x)$, we can rewrite our problem as:
$$
\log \Lf(f) = - \sum_{i=1}^n \left[\ell_{\textsf{LR}}(y_if\p x_i^{t_i}) + \sum_{s=1}^{t_i-1} \ell_{\textsf{LR}}(-f\p x_i^s)\right]
$$ where $\ell_{\textsf{LR}}(x) = \log(1 + \exp(-x))$ is the logistic loss function.   To further simplify the formula, let $d_i = (x_i, y_i, t_i), i=1, \dots, n,$ and let \begin{align}\ell(f; d_i)  = \ell_{\textsf{LR}}(y_i f\p x_i^{t_i}) + \sum_{s=1}^{t_i-1} \ell_{\textsf{LR}}(-f\p x_i^s)\label{ee3}\end{align} be \emph{the loss function}. Then, we get an ERM problem as follows:
\begin{equation}
f^* = \arg \min_f \sum_{i=1}^n \ell(f;d_i) \label{ff}
\end{equation}

In this work, our main goal is to propose algorithms which protect differential privacy for $f^*$ in Equation~\eqref{ff}.

 	\section{Perturbation Approaches}

\label{sec4}

\subsection{Extended Output Perturbation}

In this section, we  present our proposed algorithm which is
the extension of the Output Perturbation approach in \cite{chaudhuri2011differentially}. For our problem, the loss function is a sum of logistic loss functions instead of a single logistic loss function as in  \cite{chaudhuri2011differentially}.  The proposed algorithm is in fact based on the generalized version of the \emph{Laplace mechanism} \cite{dwork2008differential} which is described as follows: Let $f^*=G(\D)$ be the value that we want to guarantee differential privacy. $f^*$ is the result of applying a function $G$ on the private data set $\D$ (e.g., it is in our case to minimize the objective function).  We define the sensitivity of the function $G$ as follows:
$$
sen(G) = \max_{\D, \D\p} \| G(\D) - G(\D\p)\|
$$ where $\D$ and $\D\p$ are two neighboring data sets. Then, the differentially private version of $f^*=G(x)$ is:
$$
f_{priv} = f^* + \mu
$$ where $\mu$ is a noisy random variable with probability density  function $\pdf(\mu) \propto \exp(-\epsilon \|\mu\| / sen(G))$.

As required by the Output Perturbation approach, we consider the following regularized objective function:
\begin{align}
J(f; \D) = \frac 1 n \sum_{i=1}^{n} \ell(f;  d_i) + \frac \Lambda 2 \|f\|^2
\label{objfunc}\end{align} where $\D = \{d_i\}_{i=1}^n$, $\ell(\cdot)$ is the loss function as defined in~\eqref{ee3} and $\Lambda$ is the regularization parameter. In this approach, our goal is to compute the sensitivity of: $$f^* = \arg \min_f J(f; \D)$$ Then, we use the sensitivity to control the amount of noise added to $f^*$.

\subsubsection{Proposed Algorithm}

\begin{algorithm}[tb]
    \caption{$\A_{\sf Ext-Out-Pert}$: Extended Output Perturbation}
    \begin{algorithmic}[1]
        \REQUIRE        Data set $\D=\{d_1, \dots, d_n\}$, loss function $\ell(f; d_i)$, privacy budget $\epsilon$
        \ENSURE $f_{priv}$
        \STATE $J(f; \D) = \frac 1 n \sum_{i=1}^{n} \ell(f;  d_i) + \frac \Lambda 2 \|f\|^2$
        \STATE Minimize $J(f; \D)$ by using the BFGS algorithm to get the non-private solution $f^*$
        \STATE Compute $t \gets \frac{\sum_{s=1}^{q}\sqrt{4+\|A_s\|^2} + \max_{s \in\{1,\dots,q\}} \sqrt{\|2A_s\|^2+4} }{n\cdot\Lambda}$
        \STATE Sample a random vector $b$ such that $\pdf(b) \propto \exp\left(-\epsilon\frac{\|b\|}{t}\right)$
        \STATE Compute and output $f_{priv} \gets f^* + b$
    \end{algorithmic}
    \label{Algo:OutPert}
\end{algorithm}

Algorithm~\ref{Algo:OutPert} shows the proposed Extended Output Perturbation approach. It returns a vector $f_{priv}$ as the minimizer of $J(\cdot)$ while guaranteeing differential privacy. At Line 2, we compute the non-private solution $f^* = \arg \min_f = J(f; \D)$ using the well-known BFGS algorithm \cite{fletcher2013practical}. $f^*$ is guaranteed to exist due to the strongly convexity of $J(f; \D)$. At Line 3, we compute $t$ which is the sensitivity of $f^*$. Lines 4-5 add noise to the value of $f^*$.

In order to sample a random vector $b$ in Algorithm~\ref{Algo:OutPert} from the distribution $\pdf(b) \propto \exp\left(-\epsilon \|b\| / t \right)$, we observe that the length of the vector $b$ follows a Gamma distribution:
$$
\| b \| \sim \Gamma(d, t / \epsilon)
$$ where $d$ is the number of components of $b$. Thus, in order to sample $b$ we first sample its length $r = \|b\|$ from the Gamma distribution and then sample $b$ as a uniform random point on the surface of a sphere with radius $r$.

\subsubsection{Privacy Guarantee} In order to prove the differential privacy protection, we focus on proving that the sensitivity of $f^*$ at Line 2 in Algorithm~\ref{Algo:OutPert} is equal to the value of $t$ which is computed at Line 3.  Here, we use Lemma~\ref{lem1} from \cite{chaudhuri2011differentially} to bound the sensitivity of $f^*$.

\begin{lem}\label{lem1} Let $G(f)$ and $g(f)$ be two vector-valued functions, which are continuous and differentiable at all points. In addition, let $G(f)$ and $G(f) + g(f)$ be $\lambda-$strongly convex. If $f_1 = \arg\min_f G(f)$ and $f_2 = \arg \min_f G(f) + g(f)$, then
$$    \|f_1 - f_2\| \leq \frac 1 \lambda \max_f \| \nabla g(f)\|
$$
\end{lem}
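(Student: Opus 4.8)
The plan is to use the first-order optimality conditions for $f_1$ and $f_2$ together with the strong convexity of $G$. Since $f_1 = \arg\min_f G(f)$ and $G$ is differentiable, we have $\nabla G(f_1) = 0$. Likewise, since $f_2 = \arg\min_f (G(f) + g(f))$, we have $\nabla G(f_2) + \nabla g(f_2) = 0$. Subtracting gives $\nabla G(f_1) - \nabla G(f_2) = \nabla g(f_2)$, so that $\|\nabla G(f_1) - \nabla G(f_2)\| = \|\nabla g(f_2)\| \leq \max_f \|\nabla g(f)\|$. The task is therefore reduced to lower-bounding $\|\nabla G(f_1) - \nabla G(f_2)\|$ by $\lambda \|f_1 - f_2\|$.

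For that lower bound I would invoke the standard consequence of $\lambda$-strong convexity of $G$: for any two points $u, v$,
\[
\langle \nabla G(u) - \nabla G(v),\, u - v \rangle \geq \lambda \|u - v\|^2 .
\]
Applying this with $u = f_1$, $v = f_2$ and then Cauchy--Schwarz on the left-hand side yields
\[
\lambda \|f_1 - f_2\|^2 \leq \langle \nabla G(f_1) - \nabla G(f_2),\, f_1 - f_2 \rangle \leq \|\nabla G(f_1) - \nabla G(f_2)\| \cdot \|f_1 - f_2\| .
\]
Dividing through by $\|f_1 - f_2\|$ (the case $f_1 = f_2$ being trivial) gives $\lambda \|f_1 - f_2\| \leq \|\nabla G(f_1) - \nabla G(f_2)\|$, and combining with the identity from the first paragraph yields the claimed inequality $\|f_1 - f_2\| \leq \frac{1}{\lambda} \max_f \|\nabla g(f)\|$.

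The only genuinely delicate point is justifying the monotonicity inequality $\langle \nabla G(u) - \nabla G(v), u-v\rangle \geq \lambda\|u-v\|^2$ from $\lambda$-strong convexity; this is classical (it follows from adding the two strong-convexity inequalities $G(v) \geq G(u) + \langle \nabla G(u), v-u\rangle + \tfrac{\lambda}{2}\|v-u\|^2$ and its counterpart with $u,v$ swapped), so I would cite it or include the one-line derivation. Note the hypothesis that $G + g$ is $\lambda$-strongly convex is not actually needed for this argument --- only strong convexity of $G$ is used, together with $g$ being differentiable so that the optimality condition for $f_2$ has the stated form; I would remark on this but otherwise follow the statement as given.
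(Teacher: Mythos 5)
Your proof is correct and is essentially the standard argument: the paper does not prove this lemma itself but imports it from Chaudhuri et al.\ (2011), whose proof is exactly your combination of the first-order optimality conditions, the strong monotonicity of $\nabla G$, and Cauchy--Schwarz. Your side remark is also accurate --- the $\lambda$-strong convexity of $G+g$ is used only to guarantee that $f_2$ exists and is unique, not in the inequality itself.
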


From Lemma~\ref{lem1}, our goal now is to bound the magnitude of the difference in the gradients of the objective function $J(\cdot)$ on any two neighboring data sets.

\begin{lem} \label{lem2}
For any pair of patient's records $d_i = (x_i, y_i, t_i)$ and $d_j=(x_j, y_j, t_j)$, and for any $f$,
$$
\|\nabla \ell(f; d_i) - \nabla \ell(f; d_j)\| \leq \sum_{s=1}^{q}\sqrt{\|A_s\|^2 + 4} + \max_{s \in \{1, \dots, q\}} \sqrt{ \|2A_s\|^2 + 4 }
$$
\end{lem}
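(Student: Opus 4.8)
The plan is to write the gradient of $\ell(f;d_i)$ explicitly, then split the difference $\nabla\ell(f;d_i)-\nabla\ell(f;d_j)$ by the triangle inequality into the contribution of the ``own time'' logistic term and the contribution of the sum over prior intervals, and bound each separately. From \eqref{ee3} and the chain rule, $\nabla\ell(f;d_i)=\ell_{\textsf{LR}}'(y_if\p x_i^{t_i})\,y_ix_i^{t_i}-\sum_{s=1}^{t_i-1}\ell_{\textsf{LR}}'(-f\p x_i^s)\,x_i^s$. Throughout I would use the elementary facts $|\ell_{\textsf{LR}}'(z)|=1/(1+e^{z})\le 1$ for all $z$, $|y_i|=1$, $\|x_i\|\le1$, and consequently $\|x_i^s\|^2=\|A_s\|^2+\|x_i\|^2\le\|A_s\|^2+1$.

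For the ``own time'' piece, $\|\ell_{\textsf{LR}}'(y_if\p x_i^{t_i})y_ix_i^{t_i}-\ell_{\textsf{LR}}'(y_jf\p x_j^{t_j})y_jx_j^{t_j}\|\le\|x_i^{t_i}\|+\|x_j^{t_j}\|\le2\max_{s}\sqrt{\|A_s\|^2+1}$, and since $2\sqrt{\|A_s\|^2+1}=\sqrt{4\|A_s\|^2+4}=\sqrt{\|2A_s\|^2+4}$, this reproduces exactly the $\max$-term of the claimed bound.

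The second, and main, piece is the difference of the two sums. Assume without loss of generality $t_i\ge t_j$ and write it as $\sum_{s=1}^{t_j-1}\big(p_i^sx_i^s-p_j^sx_j^s\big)+\sum_{s=t_j}^{t_i-1}p_i^sx_i^s$, where $p_i^s:=-\ell_{\textsf{LR}}'(-f\p x_i^s)\in(0,1)$. The crucial estimate is on a paired term: since $x_i^s=\binom{A_s}{x_i}$ and $x_j^s=\binom{A_s}{x_j}$ share the same baseline block $A_s$, the difference $p_i^sx_i^s-p_j^sx_j^s$ has baseline part $(p_i^s-p_j^s)A_s$ and covariate part $p_i^sx_i-p_j^sx_j$; using $|p_i^s-p_j^s|\le1$ and $\|p_i^sx_i-p_j^sx_j\|\le p_i^s\|x_i\|+p_j^s\|x_j\|\le2$ gives $\|p_i^sx_i^s-p_j^sx_j^s\|\le\sqrt{\|A_s\|^2+4}$. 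Each tail term obeys $\|p_i^sx_i^s\|\le\|x_i^s\|\le\sqrt{\|A_s\|^2+1}\le\sqrt{\|A_s\|^2+4}$, so summing both groups yields $\sum_{s=1}^{t_i-1}\sqrt{\|A_s\|^2+4}\le\sum_{s=1}^{q}\sqrt{\|A_s\|^2+4}$, using $t_i\le q$. Adding the two pieces gives the lemma.

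The step that needs the most care is the blockwise estimate on the paired term: a naive bound $\|p_i^sx_i^s\|+\|p_j^sx_j^s\|\le2\sqrt{\|A_s\|^2+1}$ is too weak and would inflate the whole spline sum by (essentially) a factor of two, breaking the inequality. Exploiting that $x_i^s$ and $x_j^s$ differ only in their covariate coordinates --- so that their common baseline block $A_s$ enters only through the scalar weights $p_i^s,p_j^s$ whose gap is at most $1$ --- is exactly what produces the additive constant $4$ instead of a multiplicative blow-up, and it is worth spelling this decomposition out explicitly. The reduction $t_i\ge t_j$ and the disjoint tail $s=t_j,\dots,t_i-1$ should also get a word of justification, since the two records contribute different numbers of summands.
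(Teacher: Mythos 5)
Your proof is correct and follows essentially the same route as the paper's: both decompose the gradient into per-interval contributions, exploit the shared baseline block $A_s$ so that paired terms cost only $\sqrt{\|A_s\|^2+4}$, and isolate the failure-time contribution into the single $\max_s\sqrt{\|2A_s\|^2+4}$ term. The only cosmetic difference is bookkeeping --- you peel off the two own-time terms and compare them with each other, whereas the paper aligns $l_i^s$ with $l_j^s$ at each fixed $s$ and observes that at most one index falls into the worse case --- but the key estimates are identical.
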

\begin{proof}
\begin{align*}
\nabla \ell(f; d_i)
&= \nabla \ell_{\textsf{LR}}(y_i f'x_i^{t_i}) + \sum_{s=1}^{t_i-1} \nabla \ell_{\textsf{LR}}(-f\p x_i^s)\\
&= \frac{-y_i x_i^{t_i}}{1+\exp(y_i f\p x_i^{t_i})} +
\sum_{s=1}^{t_i-1}
\frac{x_i^{s}}{1+\exp(-f\p x_i^{s})}
\end{align*}

Therefore, we can write $\nabla\ell(f; d_i) = \sum_{s=1}^q l_i^s,$ where
$$
l_i^s=
\begin{cases}
\frac{x_i^s}{1+\exp(-f\p x_i^s)}, & if~s < t_i\\
\frac{-y_i x_i^s}{1+\exp(y_i f\p x_i^s)}, & if~s = t_i\\
\vec{0}, & if~s > t_i
\end{cases}
$$

Similarly, we can also write $\nabla\ell(f; d_j) = \sum_{s=1}^q l_j^s.$
Therefore,
$$\nabla\ell(f; d_i) - \nabla\ell(f; d_j) = \sum_{s=1}^q l_i^s - l_j^s$$

We have $|\frac{-y_i}{1+\exp(y_i f\p x_i^s)}|~\leq 1$, $\|x_i\|~\leq 1, \|x_j\|~\leq 1$,  for any $s \in \{1 \dots q\}$, we consider four possible cases as follows:

\textbf{Case 1}: if $s < t_i$ and $s < t_j$,  then \begin{align*}\|l_i^s - l_j^s\| = \left\| \colvec{2}{(e_1-e_2) A_s}{e_1x_i - e_2x_j} \right\| &\leq \sqrt{ \|A_s\|^2 + (\|x_i\| + \|x_j\|)^2} \\ &\leq \sqrt{\|A_s\|^2 + 4}\end{align*} where $e_1 = \frac 1 {1 + \exp(-f \p x_i^s)}$ and $e_2 = \frac 1 {1 + \exp(-f\p x_j^s)}.$

\textbf{Case 2}: if $s > t_i$ or $s > t_j$, then $\|l_i^s - l_j^s| \leq \max(\|x_i^s\|, \|x_j^s\|) \leq \sqrt{\|A_s\|^2 + 1} < \sqrt{\|A_s\|^2+4}.$

\textbf{Case 3}: if $l_i^s = \frac{-x_i^s}{1+\exp(f \p x_i^s)}$ and $l_j^s = \frac{x_j^s}{1+\exp(-f \p x_j^s)}$, then $$\|l_i^s - l_j^s\| = \left\|-\colvec{2}{(e_1+e_2)A_s}{e_1 x_i + e_2 x_j}
\right\| \leq \sqrt{\|2A_s\|^2 + 4}$$  where $e_1 = \frac 1 {1 + \exp(f \p x_i^s)}$ and $e_2 = \frac 1 {1 + \exp(-f\p x_j^s)}$.

\textbf{Case 4}: if $l_i^s = \frac{x_i^s}{1+\exp(-f \p x_i^s)}$ and $l_j^s = \frac{-x_j^s}{1+\exp(f \p x_j^s)}$, then $\|l_i^s - l_j^s\| \leq \sqrt{\|2A_s\|^2 + 4}.$ This case is similar to \textbf{Case 3}.

We observe that there is at most one value of $s, 1 \leq s \leq q,$ belonging to \textbf{Case 3} or \textbf{Case 4} in which $\|l_i^s - l_j^s\| \leq \sqrt{\|2A_s\|^2 + 4}$. Therefore, from the triangle inequality:
$$
\|\sum_{s=1}^q l_i^s - l_j^s \| \leq \sum_{s=1}^{q}\sqrt{\|A_s\|^2 + 4} + \max_{s \in \{1, \dots, q\}} \sqrt{ \|2A_s\|^2 + 4 }$$
Therefore, the lemma follows.\end{proof}

Finally, we can bound the sensitivity of $f^*=\arg\min_f J(f; \D)$ by the following lemma.

\begin{lem} \label{coro1}
The $\ell_2-$sensitivity of $f^*=\arg\min_f J(f; \D)$ is at most $\frac{\sum_{s=1}^{q} \sqrt{4  + \|A_s\|^2}   + \max_{s \in \{1, \dots, q\}} \sqrt{ \|2A_s\|^2 + 4 }    }{n\Lambda}$.
\end{lem}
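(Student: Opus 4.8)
The plan is to combine Lemma~\ref{lem1} with Lemma~\ref{lem2} in the standard way used for output perturbation. First I would fix two neighboring data sets $\D = \{d_1,\dots,d_n\}$ and $\D\p$, which by definition differ in exactly one record; say $\D$ contains $d_i$ where $\D\p$ contains $d_i\p$, and all other records coincide. Write $f_1 = \arg\min_f J(f;\D)$ and $f_2 = \arg\min_f J(f;\D\p)$. The goal is to show $\|f_1 - f_2\| \le \frac{\sum_{s=1}^q \sqrt{4+\|A_s\|^2} + \max_{s}\sqrt{\|2A_s\|^2+4}}{n\Lambda}$.

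Next I would set up the decomposition needed to invoke Lemma~\ref{lem1}. Put $G(f) = J(f;\D) = \frac1n\sum_{k=1}^n \ell(f;d_k) + \frac{\Lambda}{2}\|f\|^2$ and $G(f) + g(f) = J(f;\D\p)$, so that
\[
g(f) = \frac1n\bigl(\ell(f;d_i\p) - \ell(f;d_i)\bigr).
\]
Both $J(f;\D)$ and $J(f;\D\p)$ are $\Lambda$-strongly convex (the loss terms are convex, being sums of logistic losses composed with linear maps, and the regularizer contributes the $\Lambda\|f\|^2/2$ term), and both are continuous and differentiable everywhere, so the hypotheses of Lemma~\ref{lem1} hold with $\lambda = \Lambda$. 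Applying it gives
\[
\|f_1 - f_2\| \le \frac{1}{\Lambda}\max_f \|\nabla g(f)\| = \frac{1}{n\Lambda}\max_f \bigl\|\nabla\ell(f;d_i\p) - \nabla\ell(f;d_i)\bigr\|.
\]

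Finally I would bound the remaining quantity directly by Lemma~\ref{lem2}, which says exactly that for any two records and any $f$, $\|\nabla\ell(f;d_i\p) - \nabla\ell(f;d_i)\| \le \sum_{s=1}^q\sqrt{\|A_s\|^2+4} + \max_{s\in\{1,\dots,q\}}\sqrt{\|2A_s\|^2+4}$. Substituting this into the previous display yields the claimed sensitivity bound, and since the pair of neighboring data sets was arbitrary, the $\ell_2$-sensitivity of $f^*$ is at most this value. There is no real obstacle here — the only things to be careful about are confirming the strong-convexity hypothesis for \emph{both} objectives (so that Lemma~\ref{lem1} applies with the roles of $G$ and $G+g$ as chosen, noting it would apply symmetrically either way since $-g$ is also such that $G+g$ and $(G+g)+(-g)$ are strongly convex), and tracking the factor of $1/n$ coming from the normalization in $J$. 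The heart of the argument is already done in Lemma~\ref{lem2}; this corollary is essentially a two-line assembly.
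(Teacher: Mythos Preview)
Your proposal is correct and matches the paper's proof essentially line for line: the paper also sets $G(f)=J(f;\D)$, $g(f)=\frac{1}{n}(\ell(f;d_n\p)-\ell(f;d_n))$, verifies both objectives are $\Lambda$-strongly convex, applies Lemma~\ref{lem1}, and then invokes Lemma~\ref{lem2} to bound $\|\nabla g(f)\|$. There is nothing to add.
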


\begin{proof}
    Without loss of generality, we assume that two neighboring data sets $\D$ and $\D\p$ are different at $n^{th}$ patient
    with $(x_n, y_n, t_n) \in \D$ and $(x\p_n, y\p_n, t\p_n) \in \D\p$.
    
    Let $G(f) = J(f; \D)$, $g(f) = J(f; \D\p) - J(f; \D) = \frac 1 n ( \ell(f; d\p_n) - \ell(f; d_n) )$, $f_1 = \arg\min_f J(f; \D)$, and $f_2 = \arg\min_f J(f; \D\p)$.
    Because $\frac 1 2 \|f\|^2$ is $1-$strongly convex, $G(f) = J(f; \D)$ is $\Lambda-$strongly convex and $G(f) + g(f) = J(f; \D\p)$ is also $\Lambda-$strongly convex. From Lemma~\ref{lem2},
    \begin{align*}
    \|\nabla g(f)\| &= \left\|\frac 1 n \left(  \nabla\ell(f; d\p_n) - \nabla \ell(f; d_n)  \right) \right\|
    \\&\leq \frac{\sum_{s=1}^{q}\sqrt{4+\|A_s\|^2}  + \max_{s \in \{1, \dots, q\}} \sqrt{ \|2A_s\|^2 + 4 } }{n}
    \end{align*}
From Lemma~\ref{lem1},
$$
\|f_1 - f_2 \| \leq \frac 1 \Lambda \frac {\sum_{s=1}^{q}\sqrt{4+\|A_s\|^2}  + \max_{s \in \{1, \dots, q\}} \sqrt{ \|2A_s\|^2 + 4 }}{n}
$$
Therefore, the lemma follows.
\end{proof}

\begin{theorem} Algorithm~\ref{Algo:OutPert} is $\epsilon-$differentially private.
\end{theorem}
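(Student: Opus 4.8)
The plan is to derive the privacy guarantee directly from the sensitivity bound of Lemma~\ref{coro1} together with the generalized ($\ell_2$) Laplace mechanism recalled at the beginning of this section. Let $G$ denote the deterministic map $\D \mapsto f^* = \arg\min_f J(f;\D)$ realized by Lines~1--2 of Algorithm~\ref{Algo:OutPert}; this map is well defined because $J(\cdot;\D)$ is $\Lambda$-strongly convex and therefore has a unique minimizer. The first step is simply to observe that Line~3 sets $t$ equal to the upper bound on $sen(G)$ supplied by Lemma~\ref{coro1}, so that $\|G(\D) - G(\D\p)\| \le t$ for every pair of neighboring data sets $\D,\D\p$.

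Next I would write out the output law. On input $\D$ the algorithm returns $f_{priv} = f^* + b$ where $b$ has density $p(b) = c\,\exp(-\epsilon\|b\|/t)$, and the normalizing constant $c$ depends only on $t$, $\epsilon$ and the dimension $d$, \emph{not} on $\D$. Hence for any output point $x$,
\[
\pdf(\A(\D)=x) = c\,\exp\!\left(-\tfrac{\epsilon}{t}\,\|x - G(\D)\|\right),
\qquad
\pdf(\A(\D\p)=x) = c\,\exp\!\left(-\tfrac{\epsilon}{t}\,\|x - G(\D\p)\|\right).
\]
Taking the ratio, the constants cancel and the triangle inequality $\|x-G(\D\p)\| \le \|x-G(\D)\| + \|G(\D)-G(\D\p)\|$ gives
\[
\frac{\pdf(\A(\D)=x)}{\pdf(\A(\D\p)=x)}
= \exp\!\left(\frac{\epsilon\big(\|x-G(\D\p)\| - \|x-G(\D)\|\big)}{t}\right)
\le \exp\!\left(\frac{\epsilon\,\|G(\D)-G(\D\p)\|}{t}\right)
\le e^{\epsilon},
\]
which is exactly the definition of $\epsilon$-differential privacy.

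I do not expect a genuine obstacle here once Lemma~\ref{coro1} is in hand; the two points that merit a sentence of care are (i) that the normalizing constants cancel because the noise distribution depends on the data only through $G(\D)$, and (ii) that it suffices for $t$ to be an \emph{upper} bound on the sensitivity rather than its exact value, since overestimating the sensitivity only inflates the noise and cannot weaken the guarantee. For completeness I would also note that the two-stage sampling scheme described just after Algorithm~\ref{Algo:OutPert} (draw $r = \|b\| \sim \Gamma(d, t/\epsilon)$, then pick a uniformly random direction) does yield a vector with density $p(b) \propto \exp(-\epsilon\|b\|/t)$, so the argument applies to the algorithm as actually implemented.
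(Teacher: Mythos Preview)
Your proposal is correct and follows essentially the same argument as the paper: both bound the density ratio via the reverse triangle inequality and invoke Lemma~\ref{coro1} to conclude that the $\ell_2$-distance between the two non-private minimizers is at most $t$. Your write-up is in fact slightly more careful than the paper's in making explicit that the normalizing constant of the noise density is data-independent and that an upper bound on the sensitivity suffices.
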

\begin{proof}
    For any pair of neighboring data sets $\D$ and $\D\p$ and for any $f_{priv}$,
    $$
    \frac{\pdf(f_{priv}\mid \D)}{\pdf(f_{priv}\mid \D\p)} = \frac{\pdf(b_1)}{\pdf(b_2)} = \exp\left(-\epsilon/t(\|b_1\| - \|b_2\|)\right)
    $$ where $b_1$ and $b_2$ are the corresponding noise vectors at Line~4 in Algorithm~\ref{Algo:OutPert} with respect to the data sets $\D$ and $\D\p$. If $f_1^*$ (resp., $f_2^*$) is the solution at Line~2 of Algorithm~\ref{Algo:OutPert} on the data set $\D$ (resp., $\D\p$), then $f^*_1 + b_1 = f^*_2 + b_2 = f_{priv}$. From Lemma~\ref{coro1} and the triangle inequality:
    $$
    \|b_1\| - \|b_2\| \leq \|b_1 - b_2\| = \|f_1 - f_2\| \leq t
    $$ where $t = \frac{\sum_{s=1}^{q}\sqrt{4+\|A_s\|^2} + \max_{s \in\{1,\dots,q\}} \sqrt{\|2A_s\|^2+4} }{n\cdot\Lambda}.$
    Therefore, $\frac{\pdf(b_1)}{\pdf(b_2)} \leq \exp(\epsilon)$. Thus, Algorithm~\ref{Algo:OutPert} is $\epsilon-$differentially private.
\end{proof}

\subsection{Extended Objective Perturbation}

In this section,  we present a solution based on the Objective Perturbation approach proposed in \cite{chaudhuri2011differentially}.
Similarly to the Extended Objective Perturbation approach, we also consider the objective function as described in Equation~\eqref{objfunc}.
In this approach, instead of adding noise to the solution of the optimization problem as the output perturbation does, it adds noise to the objective function.

\subsubsection{Proposed Algorithm}

\begin{algorithm}[htb]
	\caption{$\A_{\sf Ext-Obj-Pert}$: Extended Objective Perturbation}
	\begin{algorithmic}[1]
		\REQUIRE		Data set $\D=\{ d_1,\dots, d_n\}$, objective function $J(f; \D)$, privacy budget $\epsilon$, parameter $\Lambda$
		\ENSURE $f^*$
        \STATE $\Delta \gets 0$
		\STATE Compute $\epsilon\p \gets \epsilon - 2\sum_{s=1}^{q}\log\left(1 + \frac{\frac 1 4\sqrt{\|A_s\|^2+1}}{n(\Lambda+\Delta)}\right)$
		\IF{$\epsilon\p < \epsilon/2$}
		\STATE Binary search value of $\Delta$ such that
		$
		2\sum_{s=1}^{q}\log(1 + \frac{\frac 1 4\sqrt{\|A_s\|^2 +1}}{n\left(\Lambda+\Delta\right)})
		= \epsilon / 2
		$ 		 and set $\epsilon\p \gets \epsilon/2$
		\ENDIF
		\STATE Compute $t \gets \sum_{s=1}^{q}\sqrt{4+\|A_s\|^2} + \max_{s\in\{1,\dots, q\}}\sqrt{\|2A_s\|^2 + 4}$
		\STATE Sample a random vector $b$ such that $\pdf(b) \propto \exp\left(-\epsilon\p\|b\|  / t  \right)$
		\STATE $f^* \gets \arg\min_f J(f; \D) + \frac 1 n \langle b, f\rangle + \frac 1 2 \Delta \|f\|^2$
		\STATE Output $f^{*}$
	\end{algorithmic}
	\label{algo2}
\end{algorithm}

Algorithm~\ref{algo2} shows the solution in pseudo-code. At Line 2, we compute $\epsilon\p$ which is used to calibrate the magnitude of a random variable $b$. Here, the regularization parameter is equal to $\Lambda$. At Line 3, if $\epsilon\p < \epsilon / 2$, then it indicates that $\Lambda$ is not large enough. In this case, an additional positive regularization parameter $\Delta$ is picked to set the value of $\epsilon\p$ equals to $\epsilon/2$ (Line 4). At Line 5, we compute $t$ which is the sensitivity of $\nabla J(f; \D)$. Line 6 samples a random vector $b$ using the same method described in Subsection 4.1.1. Lines 7-8 return the solution of the noisy objective function using the BFGS algorithm.

\subsubsection{Privacy Guarantee} In this section, we will prove that the probability density of $f^*$ from Algorithm~\ref{algo2} satisfies the differential privacy definition.

\begin{thm}
    Algorithm 2 is $\epsilon-$differentially private.
\end{thm}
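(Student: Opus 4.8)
The plan is to follow the structure of the objective perturbation analysis of Chaudhuri et al., adapted to the fact that our per-record gradient $\nabla \ell(f;d_i)$ is a sum of $q$ logistic-type terms rather than a single one. Fix two neighboring data sets $\D$ and $\D'$ differing only in the $n$th record, and fix an arbitrary output $f^*$. The key observation is that because $J(f;\D) + \tfrac1n\langle b,f\rangle + \tfrac12\Delta\|f\|^2$ is strongly convex in $f$, for each value of the noise vector $b$ there is a unique minimizer, so the map $b \mapsto f^*$ is a bijection (for fixed data set); the first-order optimality condition $\nabla J(f^*;\D) + \tfrac1n b + \Delta f^* = 0$ lets us solve explicitly $b = b(\D, f^*) := -n\nabla J(f^*;\D) - n\Delta f^*$, and likewise $b' = b(\D', f^*)$ for the neighbor. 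The ratio of output densities at $f^*$ is then $\frac{\pdf(b)}{\pdf(b')}\cdot\bigl|\det \mathbf{J}_{b\to f^*}(\D)\bigr|^{-1}\bigl|\det \mathbf{J}_{b\to f^*}(\D')\bigr|$, i.e. a noise-density ratio times a Jacobian ratio, and we must bound each factor by $\exp(\epsilon/2)$.

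For the noise-density factor: $b - b' = -n\bigl(\nabla\ell(f^*;d'_n) - \nabla\ell(f^*;d_n)\bigr)/n = -(\nabla\ell(f^*;d'_n) - \nabla\ell(f^*;d_n))$ (the regularization and the other $n-1$ records cancel), so by Lemma~\ref{lem2}, $\|b - b'\| \le t = \sum_{s=1}^q\sqrt{\|A_s\|^2+4} + \max_s\sqrt{\|2A_s\|^2+4}$. Since $\pdf(b)\propto\exp(-\epsilon'\|b\|/t)$, the triangle inequality gives $\pdf(b)/\pdf(b') \le \exp(\epsilon'\|b-b'\|/t) \le \exp(\epsilon')$. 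For the Jacobian factor: $\mathbf{J}_{b\to f^*}(\D) = -\bigl(n\nabla^2 J(f^*;\D) + n\Delta I\bigr)$, and the ratio of determinants is $\det\bigl(\nabla^2 J(f^*;\D') + \Delta I\bigr)\big/\det\bigl(\nabla^2 J(f^*;\D) + \Delta I\bigr)$. Writing $E = \tfrac1n(\nabla^2\ell(f^*;d'_n) - \nabla^2\ell(f^*;d_n))$ for the rank-perturbation and $M = \nabla^2 J(f^*;\D)+\Delta I \succeq (\Lambda+\Delta)I$, the ratio is $\det(I + M^{-1}E)$. Each Hessian $\nabla^2\ell(f^*;d_n)$ is a sum of at most $q$ terms of the form $\sigma(1-\sigma)\, x_i^s (x_i^s)^\top$ with $\sigma\in(0,1)$ and $\|x_i^s\|^2 = \|A_s\|^2+\|x_i\|^2 \le \|A_s\|^2+1$, so $\sigma(1-\sigma)\le 1/4$ bounds its operator-norm contribution, and $E$ is a difference of two such sums. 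I expect the intended bound is that the eigenvalues of $M^{-1}E$ are controlled termwise: $\det(I + M^{-1}E) \le \prod_{s=1}^q\bigl(1 + \tfrac{\frac14\sqrt{\|A_s\|^2+1}}{n(\Lambda+\Delta)}\bigr)^2$, matching the quantity in Line~2 of Algorithm~\ref{algo2} exactly — the factor $2$ and the per-$s$ product come from bounding $\det(I+M^{-1}\nabla^2\ell(f^*;d'_n)/n)$ and $\det(I+M^{-1}\nabla^2\ell(f^*;d_n)/n)^{-1}$ separately, each term contributing a factor $(1 + \frac{\frac14\sqrt{\|A_s\|^2+1}}{n(\Lambda+\Delta)})$. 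Hence the Jacobian ratio is at most $\exp\!\bigl(2\sum_{s=1}^q\log(1+\frac{\frac14\sqrt{\|A_s\|^2+1}}{n(\Lambda+\Delta)})\bigr) = \exp(\epsilon - \epsilon')$ by the definition of $\epsilon'$ in the algorithm (and when $\epsilon'$ would fall below $\epsilon/2$, the algorithm inflates the regularization via $\Delta$ so that this quantity is exactly $\epsilon/2$ and $\epsilon'=\epsilon/2$).

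Combining, $\pdf(f^*\mid\D)/\pdf(f^*\mid\D') \le \exp(\epsilon')\cdot\exp(\epsilon-\epsilon') = \exp(\epsilon)$, which is the claim. The main obstacle is the Jacobian/determinant step: one has to argue carefully that the single-record Hessian perturbation $E$, though it is a difference of $2q$ rank-one matrices, only contributes $q$ (not $2q$) genuinely independent eigenvalue perturbations in the product bound — this uses the structural fact from the proof of Lemma~\ref{lem2} that for each $s$ the terms $l_i^s$ and $l_j^s$ "line up" (only one index $s$ can be in the adversarial Case 3/4), together with the interlacing/multiplicativity of $\det(I+M^{-1}(\cdot))$ under successive rank-one updates and the operator-norm bound $\|M^{-1}\|\le 1/(n(\Lambda+\Delta))$ after the $\tfrac1n$ scaling. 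A secondary technical point is justifying the change-of-variables formula itself (injectivity and differentiability of $b\mapsto f^*$), which follows from strong convexity and the implicit function theorem applied to the optimality condition.
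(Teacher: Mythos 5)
Your proposal follows essentially the same route as the paper's proof: the change of variables $b \mapsto f^*$ via the first-order optimality condition, the split of the density ratio into a noise-density factor (bounded by $\exp(\epsilon')$ using Lemma~\ref{lem2} and the triangle inequality) and a Jacobian-determinant factor (bounded by $\exp(\epsilon-\epsilon')$ via per-$s$ rank-one eigenvalue bounds $\tfrac14\sqrt{\|A_s\|^2+1}$ on the Hessian perturbation against the $n(\Lambda+\Delta)$ lower bound on the eigenvalues of the full Hessian), and the recombination using the definition of $\epsilon'$ together with the $\Delta$-inflation step. One small correction: the paper closes the determinant step with the inequality $\det(\mathbb{I}+A+B)\le\det(\mathbb{I}+|A|)\det(\mathbb{I}+|B|)$ applied to $E=\sum_s E_n^s-\sum_s E_{n'}^s$, so all $2q$ rank-one factors genuinely appear — that is exactly where the factor $2$ in $2\sum_s\log(1+\cdots)$ comes from — and your closing worry that one must show only $q$ of them contribute is a red herring that would contradict the (correct) bound you derived in the preceding paragraph.
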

\begin{proof}

The noisy objective function from Algorithm 2 is:
$$
f^* = \arg \min_f \frac 1 n \sum_{i=1}^n \ell(f; d_i)  + \frac 1 n \langle b, f\rangle + \frac 1 2 \left(\Lambda + \Delta\right) \|f\|^2
$$

Due to the convexity of $\ell(\cdot)$, the gradient is zero at the minimal point $f^*$, equivalently,
$$
b = -n(\Lambda+\Delta) f^*  - \sum_{i=1}^n \nabla \ell(f^*; d_i)
$$ Due to the strongly convexity of the objective function, there is a bijective (injective and surjective) mapping from $f$ to $b$ (denoted as $f \to b$). Therefore, we can transform the probability density function of random variable $f$ to the probability density function of random variable $b$ by a multiplication factor of the Jacobian determinant \cite{billingsley2008probability}. From that, the probability density ratio in differential privacy can be rewritten as:
\begin{align}
\frac{\pdf(f\mid\D)}{\pdf(f\mid\D\p)} = \frac{\pdf(b\mid \D)}{\pdf(b\p\mid \D\p)} \cdot \frac{|\det\left(Jacob\left(f \to b \mid \D \right)\right)|^{-1}}{|\det\left(Jacob\left(f \to b\p\mid \D\p \right) \right)|^{-1}}\label{eqn4}
\end{align}

We first bound the ratio of the Jacobian determinants. Without loss of generality, we assume that the two data sets $\D$ and $\D\p$ are different at $n^{th}$ record with $d_n \in \D$ and $d_{n\p} \in \D\p$.
Let
$$
A = -Jacob(f\to b \mid \D) = n(\Lambda + \Delta) \mathbb{I}+ \sum_{i=1}^n \nabla^2 \ell(f^*; d_i)
$$ and
$
E =  \nabla^2 \ell(f^*; d_n) - \nabla^2 \ell(f^*; d\p_n)
$, then $$  \frac{|\det\left(Jacob\left(f \to b\mid \D \right) \right)|^{-1}}{|\det\left(Jacob\left(f \to b\p \mid \D\p \right)\right)|^{-1}}
 =
 \frac{|det(A+E)|}{|det(A)|} = |det(\mathbb I + A^{-1}E)|
 $$ Moreover, $
E = \sum_{s=1}^q E_n^s - \sum_{s=1}^q E_{n\p}^s
$
where
$$
E_n^s=
\begin{cases}
\frac{(x_n^s) (x_n^s)\p}{(1+\exp(f\p x_n^s))(1+\exp(-f\p x_n^s))}, & if~s < t_n\\
\frac{-y_n^2(x_i^s) (x_n^s)\p}{(1+\exp(y_n f\p x_n^s))(1+\exp(-y_n f\p x_n^s))}, & if~s = t_n\\
0, & if~s > t_n
\end{cases}
$$
Similarly, we can define $E_{n\p}^s$ by replacing $n$ by $n\p$.  From  \cite{seiler1975inequality}, for any square matrices $A$ and $B$, $$det(\mathbb I + A + B) \leq det(\mathbb I+|A|)\cdot det(\mathbb I + |B|)$$ where $|A| = (A\p A)^{\frac 1 2}$. Moreover, $A^{-1}E_n^s$ and $A^{-1}E^s_{n\p}$ are symmetric, thus
$$
det(\mathbb I + A^{-1}E) \leq \prod_{s=1}^{q} det(\mathbb I + A^{-1}E_n^s) \cdot det(\mathbb I + A^{-1}E_{n\p}^s)
$$
We now prove that $|det\left(\mathbb I + A^{-1}E_n^s\right)| \leq 1 + \frac{\frac 1 4 \sqrt{\|A_s\|^2 + 1} }{n(\Lambda + \Delta)}$. Because
$\left|\frac{-y_n^2}{(1+\exp(y_n f\p x_n^s))(1+\exp(-y_n f\p x_n^s))}\right| \leq \frac 1 4$, and
$E_n^s$ is either a zero matrix or 1-rank matrix. The only non-zero eigenvalue of $E_n^s$ if exist satisfies $|\lambda_1(E_n^s)| \leq \frac 1 4 \| x_n^s\| \leq \frac 1 4 \sqrt{\|A_s\|^2 + 1}$. As the objective function is $(\Lambda + \Delta)-$strongly convex, $A$ is a full-rank matrix with each eigenvalue greater than $n(\Lambda + \Delta)$. Therefore, $|det\left(\mathbb I + A^{-1}E_n^s\right)| \leq 1 + \frac{\frac 1 4\sqrt{\|A_s\|^2 + 1} }{n(\Lambda + \Delta)}.$ Similarly, $|det(\mathbb{I} + A^{-1}E_{n\p}^s)| \leq 1 + \frac{\frac 1 4\sqrt{\|A_s\|^2 + 1} }{n(\Lambda + \Delta)}.$ Therefore, \begin{align}
 \frac{|\det\left(Jacob\left(f \to b\right) \mid \D \right)|^{-1}}{|\det\left(Jacob\left(f \to b\p\right) \mid \D\p \right)|^{-1}}
 \leq \exp \left( 2\sum_{s=1}^{q}\log(1 + \frac{\frac 1 4\sqrt{\|A_s\|^2+1}}{n\Lambda}) \right) \label{eqn1}\end{align}

Next, we bound the ratio of the probability density of random vector $b$ with respect to two neighboring data sets.  We have:
$$b - b\p = \nabla \ell(f^*; d_n) - \nabla \ell(f^*; d\p_n)$$ From Lemma~\ref{lem2}, $$\|b - b\p \| \leq \sum_{s=1}^{q} \sqrt{\|A_s\|^2 + 4} + \max_{s \in\{1, \dots, q\}} \sqrt{\|2A_s\|^2+ 4}$$ Therefore,
\begin{align}
\frac{\pdf(b\mid \D)}{\pdf(b\p\mid \D\p)} \leq \exp(\epsilon\p \|b-b\p\| / t) \leq \exp(\epsilon\p)\label{eqn2}
\end{align}

From \eqref{eqn4}, \eqref{eqn1},  \eqref{eqn2}, and $\epsilon = \epsilon\p +  2\sum_{s=1}^{q}\log\left(1 + \frac{\frac 1 4\sqrt{\|A_s\|^2+1}}{n\Lambda}\right)$, the theorem follows.
\end{proof}

\section{Proposed Sampling Approach}
\label{sec5}

In this section, we propose a solution which guarantees differential privacy by directly sampling a random output from a modified version of the posterior distribution. In this work, we pick a normal distribution as the prior distribution. This is equivalent to using:
$$
\mathcal U(f; \D) = -\frac 1 2 \sigma \|f \|^2 - \sum_{i=1}^n \ell(f; d_i)
$$ as the utility function in the \emph{exponential mechanism} \cite{mechanism-design-via-differential-privacy} where the parameter $\sigma$ is used to control the variance of the prior normal distribution.
Then, the differrentially private output is sampled from the following distribution:
$$
\pdf(f) \propto \exp\left(\frac {\epsilon \UU(f;\D)}{  2\Delta_\UU}\right)
$$ where $\Delta_\UU = \max_{d(\D, \D\p)=1, f} \|\UU(f;\D) - \UU(f; \D\p)\|$ is the sensitivity of $\UU.$
The reason we pick a normal prior distribution instead of a uniform prior distribution is not because our proposed solution required so to guarantee differential privacy but we observe that with a normal prior distribution the sampling algorithm converges better and is more stable.

Moreover, this approach requires the utility function $\UU(f; \D)$ has to have a bounded sensitivity. However, the loss function $\ell(\cdot)$ is not bounded. Therefore, the function $\UU(f; \D)$ has unbounded sensitivity. In order to overcome this difficulty, we propose a smooth sanitizer function $C(x)$ which is used to control the maximum value of the loss function $\ell(\cdot)$. The definition of $C(x)$ is given as follows:$$
C_v(x )=
v \cdot tanh\left(\frac x v\right)
$$ which is illustrated in Figure~\ref{figure2}.  We now take the composition of $C_v(\cdot)$ with $ \ell(f; d_i)$ to have a bounded-sensitivity utility function:
$$
\UU(f; \D) =  -\frac 1 2 \sigma \|f \|^2  - \sum_{i=1}^n C_v\left(
\ell(f; d_i) \right)
$$

 We intentionally pick the $tanh(\cdot)$ function as the sanitizer because it nicely keeps the loss function in its original form when the value of the loss function is near $0$. Meanwhile, it deforms the loss function at large values to make the function finite. The advantage of this approach is that the sampled parameter can arbitrary large while the objective function is kept almost the same around the optimal parameter which maximizes the posterior probability.
 We describe the pseudo-code of our approach in Algorithm~\ref{algo3}.

\begin{figure}[t]
  \centering
\begin{tikzpicture}
\begin{axis}[scale=0.7, xmin=0,xmax=40, ymin=0, ymax=40,legend style={legend pos=north west},
legend entries={ {} {$y = x$} , {} {$y = C_{20}(x)$}   }, xlabel={x},
  declare function={
    blossfn(\x)= (x<=19)*x + (x>=19)*( 20- 1/(x-18)) ;
  },  declare function={
    blossfn1(\x)= (x<=19)*x + (x>=19)*(20-exp(19-x)) ;
  }
]
\addplot [domain=0:40,samples=200,color=red] {x} ;
\addplot [domain=0:40,samples=200,color=blue] { 20*tanh(x/20) };

\addplot [domain=0.0:40,samples=200,color=black,dashed] {20 };

\end{axis}
\end{tikzpicture}

  \caption{The illustration of the sanitizer function (blue) with maximum value $20$ and the identity function (red).}
  \label {figure2}
\end{figure}
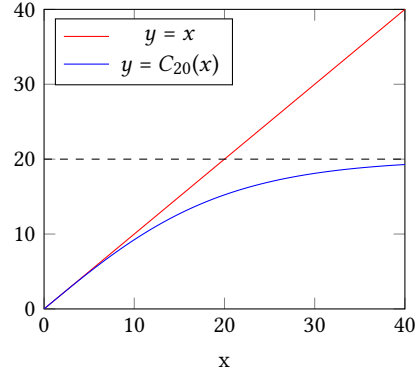

 \begin{algorithm}[tb]
    \caption{$\A_{\sf Sanitized-EXP}$: Sanitized Loss Mechanism}
    \begin{algorithmic}[1]
        \REQUIRE        Data set $\D=\{d_i \}_{i=1}^n$, loss function $\ell(f; d_i)$, privacy budget $\epsilon$, maximum value $v$, parameter $\Lambda$
        \ENSURE $f$
        \STATE $\UU(f; \D) =  -\frac 1 2 \sigma \|f \|^2  - \sum_{i=1}^n C_v(\ell(f; d_i))$
        \STATE Sample a random vector $f$ with the probability density $$\pdf(f) \propto \exp\left(\frac{\epsilon}{2v} \UU(f; \D)\right)$$
    \end{algorithmic}
    \label{algo3}
\end{algorithm}

\newpage
\begin{thm}[Privacy guarantee] Algorithm~\ref{algo3} is $\epsilon-$differentially private.
\end{thm}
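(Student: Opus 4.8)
The plan is to recognize Algorithm~\ref{algo3} as an instance of the exponential mechanism and run the standard two-part argument; the only non-routine points are pinning down the sensitivity $\Delta_\UU$ and checking that the sampling distribution in Line~2 is actually proper.

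First I would record that the loss is strictly positive: since $\ell_{\textsf{LR}}(x)=\log(1+\exp(-x))>0$ for every $x$, the expression~\eqref{ee3} gives $\ell(f;d_i)>0$ for all $f$ and all records $d_i$. Because $\tanh$ maps $[0,\infty)$ into $[0,1)$, this yields $0\le C_v(\ell(f;d_i))<v$ for every $i$ and every $f$. Now take neighboring data sets $\D$ and $\D\p$ differing only in the $n^{th}$ record, $d_n$ versus $d\p_n$. Every term of $\UU$ except the one involving record $n$ cancels, so for all $f$,
\[
\bigl|\UU(f;\D)-\UU(f;\D\p)\bigr| = \bigl|C_v(\ell(f;d\p_n)) - C_v(\ell(f;d_n))\bigr| \le v,
\]
since both quantities lie in $[0,v)$. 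Hence $\Delta_\UU\le v$. This is precisely where the nonnegativity of $\ell$ enters: without it, the difference of two $C_v$ values could be as large as $2v$, and the factor $2v$ in the exponent of Line~2 would no longer suffice.

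Next I would verify that $Z(\D):=\int \exp\!\bigl(\tfrac{\epsilon}{2v}\UU(f;\D)\bigr)\,df$ is finite and strictly positive. From $0\le \sum_i C_v(\ell(f;d_i))<nv$ we get the pointwise sandwich $\exp(-\tfrac{\epsilon n}{2})\exp\!\bigl(-\tfrac{\epsilon\sigma}{4v}\|f\|^2\bigr)\le \exp\!\bigl(\tfrac{\epsilon}{2v}\UU(f;\D)\bigr)\le \exp\!\bigl(-\tfrac{\epsilon\sigma}{4v}\|f\|^2\bigr)$, and the Gaussian integral on the right is finite whenever $\sigma>0$, so $0<Z(\D)<\infty$. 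Then for any output $f$ and any pair of neighbors,
\[
\frac{\pdf(f\mid\D)}{\pdf(f\mid\D\p)} = \exp\!\Bigl(\tfrac{\epsilon}{2v}\bigl(\UU(f;\D)-\UU(f;\D\p)\bigr)\Bigr)\cdot\frac{Z(\D\p)}{Z(\D)} \le \exp(\epsilon/2)\cdot\frac{Z(\D\p)}{Z(\D)},
\]
using $\Delta_\UU\le v$. Finally, the same pointwise bound gives $\UU(f;\D\p)\le\UU(f;\D)+v$, which upon integration yields $Z(\D\p)\le\exp(\epsilon/2)\,Z(\D)$; combining, the overall ratio is at most $\exp(\epsilon/2)\exp(\epsilon/2)=\exp(\epsilon)$, as claimed.

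The main obstacle is not a single hard estimate but making sure two easily overlooked points are handled correctly: (i) that $\Delta_\UU\le v$ rather than $2v$, which hinges on $\ell\ge 0$ so that each sanitized loss lies in $[0,v)$; and (ii) that the normalizing constants $Z(\D)$ are finite, which is exactly why the Gaussian prior term $-\tfrac12\sigma\|f\|^2$ cannot be dropped — with a flat prior over the whole parameter space the integral would diverge even after sanitizing the loss, and the exponential-mechanism calculation would be meaningless.
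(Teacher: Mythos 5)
Your proof is correct and follows essentially the same route as the paper: the standard exponential-mechanism argument with sensitivity $\Delta_\UU \le v$, splitting the density ratio into the pointwise factor and the normalizing-constant factor, each contributing $\exp(\epsilon/2)$. The paper simply asserts $\Delta_\UU \le v$ and the properness of the distribution, whereas you justify both (via $\ell>0$ so each sanitized loss lies in $[0,v)$, and via the Gaussian sandwich for $Z(\D)$); these are welcome details but not a different argument.
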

\begin{proof}
    For two neighboring data sets $\D$ and $\D^\prime$, \\ $\Delta_\UU = \max_{f, d(\D, \D\p)=1}|\UU(f; \D) - \UU(f; \D\p)| \leq v $.
    Therefore, at any point $f,$ we have \begin{align*}\frac{\pdf(f\mid \D)}{\pdf(f\mid \D\p)}
    & =
    \frac{\exp\left(\frac {\epsilon} {2v} \UU(f; \D)\right) /_{
    \int    \exp\left(\frac {\epsilon} {2v} \UU(f; \D)\right) df}
    }{\exp\left(\frac {\epsilon} {2v} \UU(f; \D\p)\right) /_{
        \int    \exp\left(\frac {\epsilon} {2v} \UU(f; \D\p)\right) df}
}\\
    &\leq \exp\left(\frac {2\epsilon} {2v}  \left| \UU(f; \D) - \UU(f; \D\p)\right|\right)\\
    &\leq \exp(\epsilon)
    \end{align*}
Therefore, Algorithm~\ref{algo3} is  $\epsilon-$differentially private.
\end{proof}

The problem with Algorithm~\ref{algo3} is that there is no run-time efficient algorithm to sample the distribution of $f$ exactly. Bassily et al. \cite{bassily2014private} proposed a polynomial run-time sampling algorithm. However, their proposed algorithm is still impractical due to the high degree of the polynomial run-time complexity and only apply for the log-convex function.
Recently, there are developments \cite{chen2014stochastic,ahn2012bayesian,ma2015complete} in Markov Chain Monte Carlo (MCMC) method which can be applied to machine learning problems with large data sets. The idea is to construct Markov chains to simulate dynamical systems with stochastic gradients. At each step, we compute the gradient at the current location, then add a controlled amount of noise to the gradient and follow the noisy gradient to a new location. Asymptotically, the stationary distribution of this process converges to the true distribution from which the gradient is computed.

In this work, we propose to use an MCMC sampling algorithm, namely Preconditioned Stochastic Gradient Langevin Dynamics (pSGLD) \cite{li2015preconditioned}, to approximately sample the posterior distribution.
pSGLD is good at sampling variables with differences in scale which is useful for our problem because the parameter $\alpha$ is usually much larger in magnitude than the parameter $\beta$ (recall that $f=[\alpha, \beta]\p$).
The pseudo-code of pSGLD is described in Algorithm~\ref{algo4}.
At Line 1, we initialize the values of $V_0$ and $f_1$. Line 3 computes the learning rate $\epsilon_t$. It is required that $\lim_{t \to \infty} \sum_t \epsilon_t \to \infty$ and $\lim_{t \to \infty} \sum_t \epsilon_t^2 < \infty$ to guarantee the convergence. We sample uniformly $k$ records from $\D$ for estimating the average gradient $\bar g^t$ (Line 5). We then compute the variance of the gradient at Line 6 ($\odot$ is the element-wise product) and convert it to the preconditioned matrix $G^t$ at Line 7. We update the parameter at Line 8 with a noise variable $\mathcal N(0, \epsilon_t G^{t})$.
It is worth to note that there is a permanent bias in \textsf{pSGLD} due to excluding a correction term in the updating step (Line 8). However, this bias is negligible and excluding the correction term helps to speed up the sampling algorithm which then helps to reduce the finite-sample bias as more steps are executed in a finite amount of time.

\begin{algorithm}[t]
    \caption{$\A_{\sf pSGLD}$: pSGLD Sampling Algorithm}
    \begin{algorithmic}[1]
        \REQUIRE Data set $\D=\{d_i\}_{i=1}^n$, loss function $\ell$, privacy parameter $\epsilon$, $\mu$, $k$, bounded value $v$ and learning rate $\tau$
        \ENSURE $f^{T+1}$
        \STATE $V_0 \gets \vec{0}$, $f_1 \gets \vec{0}$
        \FOR{$t = 1$ to $T$}
            \STATE Compute $\epsilon_t \gets t^{-\tau}$
            \STATE Uniformly sample $\Omega_k^t = \left\{d_{t_1}, \dots, d_{t_k}\right\} \subset \D$
            \STATE Compute $\bar g^t = \frac \epsilon {2v} (\frac {\sigma f^t} n  + \frac 1 {k}  \sum_{i=1}^k \nabla C_v(\ell(f^t, d_{t_i}))) $
            \STATE $V^t \gets \mu V^{t-1} + (1-\mu)(\bar g^t \odot \bar g^t)$
            \STATE $G^t \gets 1 / \left( \lambda \mathbb{I} + diag(\sqrt{V^t})\right)$
            \STATE $f^{t+1} \gets f^{t} - \epsilon_t  \left(G^{t} \cdot n \bar g^t  \right) + \mathcal{N}(0, \epsilon_t G^{t})$
        \ENDFOR
        \STATE Output $f^{T+1}$
    \end{algorithmic}
    \label{algo4}
\end{algorithm}

\section{Experimental Evaluation}
\label{sec6}

In this section, we present the results of our experiments on four real data sets. We focus on answering the following three important research questions: (1) Does the sampling approach converge to its stationary distribution? (2) What is the trade-off between privacy and accuracy as compared to the non-private estimation? (3) Are the discrete-time regression models good alternatives to the Cox regression model? In the following sections, we address the above research questions accordingly.

\subsection{Data Sets}

\begin{table}[h]

        \renewcommand{\arraystretch}{1.3}
        \caption{Statistics of the data sets.}
        \centering
        \begin{tabular}{ |c| | c|c |c| }
                \hline
                { Data set } & {Size} & {\#uncensored} & {\#explanatory variables} \\ \hline\hline
                \textsf{FL} & $7874$ & $2169$ & $8$ \\          \hline
                \textsf{TB} & $16116$ & $1761$ & $3$\\                     \hline
                \textsf{WT} & $21685$ & $18615$ & $3$\\          \hline
                \textsf{SB} & $53558$ & $16341$ & $3$\\
                \hline
        \end{tabular}

        \label{tab:dataset}
\end{table}

We use four real data sets in our experiments. Table~\ref{tab:dataset} gives the statistics of these data sets.

\begin{itemize}
    \item \header{\it The FLchain data set}  (\textsf{FL}) - It is obtained from a study on the association of the serum free light chain with higher death rates \cite{kyle2006prevalence, dispenzieri2012use}. The survival time of a patient is measured in days from enrollment until death. The censored cases are patients who are still alive at the last contact. The explanatory variables are \textsf{age}, \textsf{sex}, \textsf{creatinine}, \textsf{mgus}, etc.

    \item \header{\it The time-to-second-birth} (\textsf{SB})  and \textit{time-to-third-birth} (\textsf{TB}) \textit{data sets} - They are obtained from The Medical Birth Registry of Norway \cite{irgens2000medical}. The survival time is the time between the first and second births,  and between the second and third births respectively. The censored cases are women who do not have the second birth, and the third birth respectively, at the time the data are collected. The explanatory variables in \textsf{SB} (resp., \TB) are \textsf{age}, \textsf{sex} and death
 of earlier children (resp., \textsf{age}, \textsf{spacing} and \textsf{sibs}).

\item \header{\it The Wichert data set}  (\textsf{WT}) - It contains records on unemployment duration of people in Germany \cite{wichert2008simple}. The survival time is the duration of unemployment until having a  job again. The censored cases are the ones who do not have a new job at the time the data are collected. The explanatory variables are \textsf{sex}, \textsf{age} and \textsf{wage}.

\end{itemize}

The survival times in these four data sets are normalized to the interval $[0, 1]$. We set the number of discrete-time intervals $q = 200$. All the vectors of the explanatory variables are normalized to have zero mean and fitted inside the unit sphere. We use the natural cubic spline with $e = 3$ knots to model the baseline hazard effect.

\subsection{Convergence of the Proposed Sampling Approach}

\begin{figure*}[tp]
    \centering
    \includegraphics[width=\textwidth]{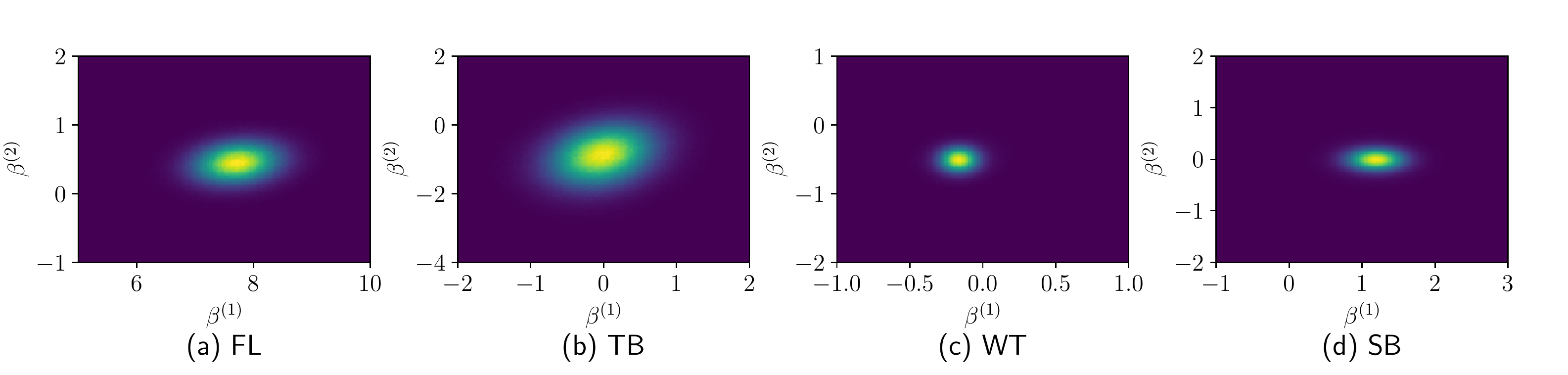}
    \caption{An illustration of the probability densities of the sampling posteriors after 250 epochs at privacy budget $\epsilon = 6.4$.}\label{figure1}
\end{figure*}

\begin{figure}[tb]
    \centering
        \includegraphics[width=0.5\linewidth]{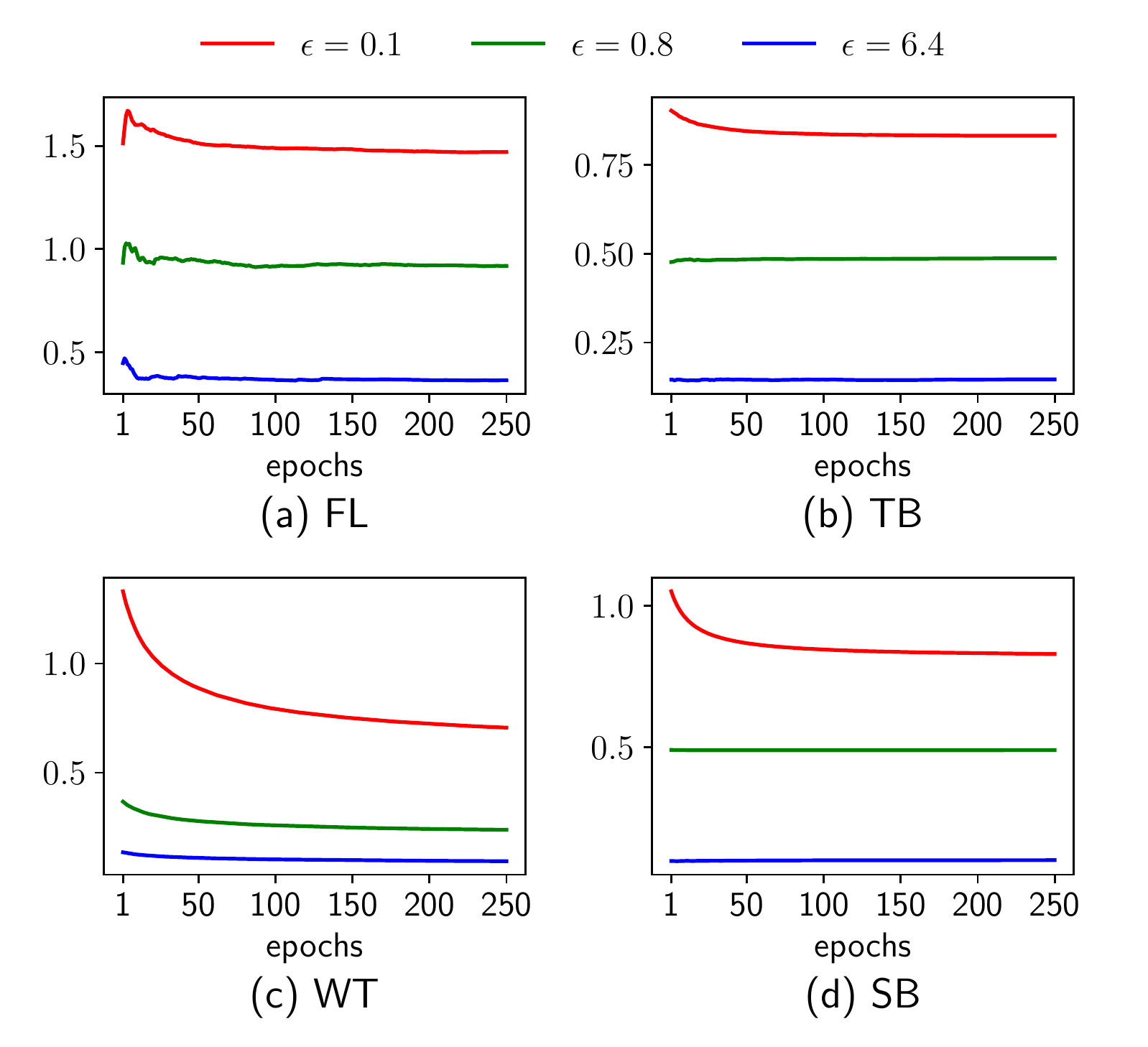}
    \caption{An illustration of MRE as a statistical test for samples from the pSGLD sampling algorithm.}\label{figure4}
\end{figure}

This section reports on the convergence of our proposed sampling approach. The aim is to check whether it converges to the stationary distribution.  The loss function is bounded by the value $v = 2\log(n)$ where $n$ is the size of the data set. We set the parameter $\sigma = 10^{-2}\cdot2v/\epsilon$. At each step of the Markov chain, we randomly pick $k = 200$ records from the data set to compute the gradient. We set the parameters $\tau = 0.51$, $\lambda = 10^{-5}$ and $\mu = 0.99$ in Algorithm~\ref{algo3}. In Figure~\ref{figure1}, we plot the estimated probability densities of the two first parameters ($\beta^{(1)}$ and $\beta^{(2)}$) after 250 epochs from the sampling process. We remove the first $10^4$ steps as the Markov chain does not reach the stationary distribution at the beginning. We can observe that the probability densities of the samples are very similar to the normal distributions which are actually what we expect when sampling from the posterior distributions.

For a more formal test, we use the mean relative error (MRE) as a statistical test of convergence. MRE is defined as follows: \begin{align}\label{eqn:mre}
    \operatorname{MRE} = \frac{1}{t} \sum_{i=1}^t \frac{\|f_i - f^*\|}{\|f^*\|}
\end{align} where $f_i$ is the parameter vector from the sampling process, $f^*$ is the optimal parameter vector which maximizes the likelihood in non-private setting and $t$ is the number of samples. We plot the MRE as the function of epochs with three different privacy budgets in Figure~\ref{figure4}. Each epoch is a bundle of $n$ steps. We observe that after 250 epochs, MRE becomes stable which indicates that the sampling procedure converges to its stationary distribution.

\subsection{Trade-off between Privacy and Accuracy}

\begin{table}[!ht]
        \renewcommand{\arraystretch}{1.3}
        \caption{The performance in MRE of \textsf{Ext-Out-Pert} approach for different regularization parameters with privacy budget $\epsilon = 6.4$. The best performance results are in bold.}
        \centering
        \begin{tabular}{c|| c c c c  c c c }
          $\Lambda$ & $10^{-5}$ & $10^{-4}$ & $10^{-3}$ & $10^{-2}$  & $10^{-1}$ & $10^{0}$  \\
          \hline
 \FL   &  981.635  &  98.135  &  9.828  &  1.195  &  \textbf{0.837}  &  0.882   \\
\TB  &  90.994  &  9.113  &  1.273  &  \textbf{0.964}  &  0.975  &  0.983\\
\WT    &  342.939  &  34.297  &  3.456  &  \textbf{0.763}  &  0.765  &  0.81 \\
\SB  &  9.59  &  1.224  &  \textbf{0.957}  &  0.993  &  0.998  &  0.999  \\
        \end{tabular}

        \label{table3}
\end{table}

In this section, we investigate the trade-off between privacy and accuracy in our proposed approaches. We first need to pick the value of regularization terms for the perturbation approaches (\textsf{Ext-Obj-Pert} and \textsf{Ext-Out-Pert}) as the accuracy of these approaches are very much depend on the regularization parameter $\Lambda$. We report in Table~\ref{table3} the MREs of \textsf{Ext-Out-Pert} with different values of $\Lambda$ and privacy budget $\epsilon = 6.4$. For consistency in performance comparison, we will use the best values of $\Lambda$, which lead to the smallest relative error per data set.

\begin{figure*}[!t]
    \centering
        \includegraphics[width=\textwidth]{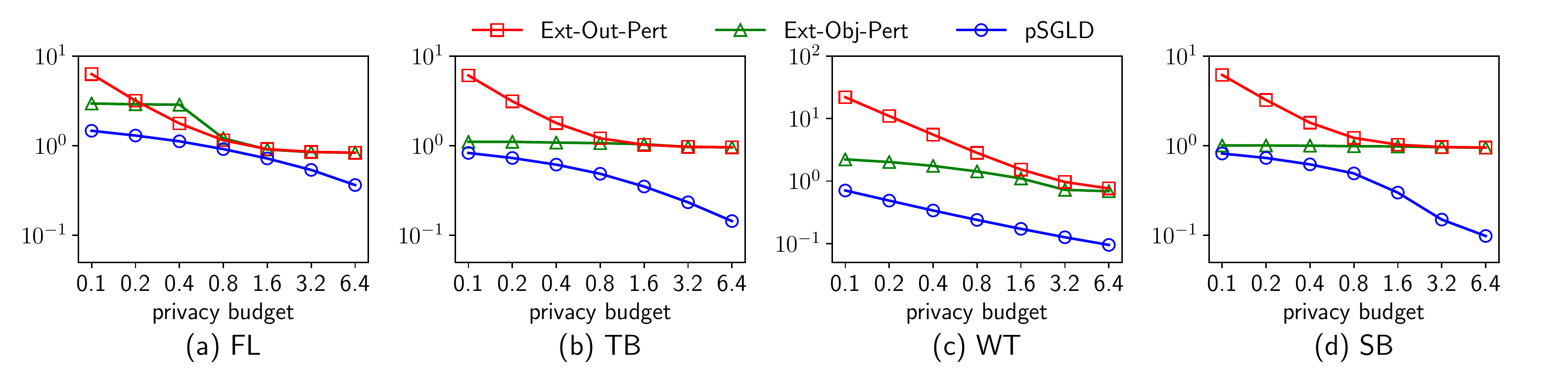}
    \caption{The performance of our proposed approaches in MRE with privacy budget $\epsilon$ from $0.1$ to $6.4$.}\label{figure5}
\end{figure*}

To measure the accuracy of the proposed approaches at different privacy levels, the privacy budget is varied from $0.1$ to $6.4$. We also use MRE for the measurement. The results are shown in Figure~\ref{figure5}. Overall, \textsf{pSGLD} outperforms both \textsf{Ext-Out-Pert} and \textsf{Ext-Obj-Pert} approaches. Moreover, we observe that the accuracy of \text{Ext-Out-Pert} and \textsf{Ext-Obj-Pert} does not improve much at high privacy budgets. It is due to the large regularization parameter that causes the output parameter moving towards the zero vector instead of the optimal parameter as the regularization term is the dominant factor of the objective function.  Meanwhile, our proposed sampling approach (\textsf{pSGLD}) does not suffer from this effect which leads to much better results at high privacy budgets.

\subsection{Comparison with Cox regression}
\begin{table}[!pth]
        \renewcommand{\arraystretch}{1.3}
        \caption{Relative error of the discrete-time survival regression as compared to the Cox regression.}
        \centering
        \begin{tabular}{ |c| |  c | }
                \hline
                { Data set } & Relative error (\%) \\ \hline\hline
                \textsf{FL} & $2.589\%$  \\ \hline
                \textsf{TB} & $\textbf{9.039\%}$  \\                     \hline
                \textsf{WT} & $3.617\%$ \\          \hline
                \textsf{SB} & $2.618\%$ \\
                \hline
        \end{tabular}

        \label{table2}
\end{table}

Here, we want to confirm that the discrete-time regression models are good alternatives to the Cox regression model. We compare the results obtained from the non-private discrete-time regression models without regularization term to the results obtained from Cox regression. We use the relative error (RE) which is defined as:
$$
RE = \frac {\|\beta - \beta^*\|} {\|\beta^*\|}
$$ where $\beta$ is from the discrete-time regression with logit link and $\beta^*$ is from Cox regression. The results are shown in Table~\ref{table2}. We observe that the results obtained from the discrete-time regressions are very similar to the results obtained from the Cox regression with relative errors ranging from $2\%-9\%$. At the worse case of the data set \TB, the parameter obtained from the discrete-time model $\beta = [0.0122443, -0.849823, -0.239539]\p$ is still a good approximation of the parameter obtained from the Cox model $\beta^* = [0.0585478, -0.790977, -0.23906]\p$. As such, these results confirm that the discrete-time regression models are good alternatives to the Cox regression in practice.

\section{Conclusion}
\label{sec7}
In this work, we propose solutions for the problem of protecting differential privacy for discrete-time regression models used in survival analysis. In particular, we extend the perturbation approaches to a generalized form in which the loss function is a sum of logistic loss functions. In addition, we propose a sampling approach to practically protect differential privacy by sampling a scaled posterior distribution with the \textsf{pSGLD} sampling algorithm.
Even though we focus our work on discrete-time survival regression, our proposed approaches can be applied to other problems with similar loss functions as well. Moreover, our proposed approaches can be easily extended to discrete-time regression models in which the explanatory variables are changed over time.
For further work, a differentially private version of Cox regression would be a good complement to our work.
 
\begin{acks}
  The authors would like to thank Dr. Xiaokui Xiao for his insightful comments on the privacy problem of Cox regression.
\end{acks}

\bibliographystyle{ACM-Reference-Format}
\bibliography{main}

\end{document}